\pgfplotsset{compat=1.17}
\newcommand{\dice}[0]{\texttt{Dice}}
\newcommand{\flip}[0]{\texttt{flip}}
\newcommand{\dbracket}[1]{{\llbracket #1 \rrbracket}}
\newcommand{\hoist}[0]{\mathtt{hoist}}
\newcommand{\ancestor}[0]{\mathtt{ancestor}}
\newcommand{\fh}[0]{\flip{}-hoisting}
\newcommand{\pathcond}[0]{\textsc{pc}}
\newcommand{\prog}[0]{\mathtt{p}}
\newtheorem{theorem}{Theorem}
\newtheorem{proposition}[theorem]{Proposition}
\newtheorem{definition}{Definition}
\definecolor{blue}{RGB}{158, 221, 255}
\begin{document}

\runningtitle{\texttt{flip}-hoisting: Exploiting Repeated Parameters in Discrete Probabilistic Programs}

% If your paper is accepted and the title of your paper is very long,
% the style will print as headings an error message. Use the following
% command to supply a shorter title of your paper so that it can be
% used as headings.
%
%\runningtitle{I use this title instead because the last one was very long}

% If your paper is accepted and the number of authors is large, the
% style will print as headings an error message. Use the following
% command to supply a shorter version of the authors names so that
% they can be used as headings (for example, use only the surnames)
%
%\runningauthor{Surname 1, Surname 2, Surname 3, ...., Surname n}

\twocolumn[

\aistatstitle{\texttt{flip}-hoisting: Exploiting Repeated Parameters\\in Discrete Probabilistic Programs}

\aistatsauthor{ Ellie Y. Cheng \And Todd Millstein \And Guy Van den Broeck \And Steven Holtzen }

\aistatsaddress{ MIT \And UCLA \And UCLA \And Northeastern University } 
]

\begin{abstract}
  Many of today's probabilistic programming languages (PPLs) have brittle inference
  performance: the
  performance of the underlying inference algorithm is very sensitive to the
  precise way in which the probabilistic program is written. A standard way of 
  addressing this challenge in traditional programming languages is via
  \emph{program optimizations}, which seek to unburden the programmer from
  writing low-level performant code, freeing them to work at a higher-level of
  abstraction. 
  The arsenal of applicable program optimizations for PPLs to choose from is
  scarce in comparison to traditional programs; few of
  today's PPLs offer significant forms of automated program optimization. 
  In this work we develop a new family of program optimizations specific to
  discrete-valued knowledge compilation based PPLs. We identify a particular 
  form of program structure unique to these PPLs that 
  tangibly affects exact inference performance in these programs: 
  \emph{redundant random variables} -- variables with repeated parameters and
  inconsistent path conditions.
  We develop a new program analysis and associated optimization called \fh{} that identifies 
  these redundancies and optimizes them into a single random variable. 
  We show that \fh{} yields inference speedups of up to 60\% on 
  applications of probabilistic programs such as Bayesian networks and probabilistic verification.
\end{abstract}

\section{INTRODUCTION}
Probabilistic programming languages (PPLs) have immense promise as a
general-purpose and widely accessible probabilistic modeling solution, but
they suffer from brittle inference performance: two programs with the same semantics,
written with subtly different syntax, can have drastically different inference
performance. As a consequence, many of today's PPLs are difficult to use for
non-experts that cannot navigate the subtle ways in which the structure of the
program impacts inference. For example, \citet{gorinova2020automatic} showed 
that subtle re-parameterizations of programs, such as centering, yields 
radically different inference performance for Hamiltonian Monte Carlo inference; 
an average end-user should ideally be blissfully unaware of this.

The standard way in which the challenge of brittle performance is addressed in 
traditional programming languages is via \emph{program
optimizations}~\citep{aho1986compilers}. Optimizations permit the programmer to
work at a high-level of abstraction, and delegate to the compiler the job of
synthesizing a more efficiently written program; this separation of concerns is
critical to the design of modern compilers and programming languages such as
LLVM~\citep{lattner2008llvm}. However, in contrast to traditional programming languages,
there is a much broader variety of optimizations for PPLs due to the large number of
possible inference algorithms: the effect an optimization might have on a
program is intimately connected with which inference algorithm is ultimately
used. For instance, \citet{gorinova2020automatic} identified an optimization
that is specific to HMC, and
\citet{huang2017compiling} identified optimizations specific to parallel Gibbs sampling. 
An optimization that may work for a PPL may be ineffective in another. 
Hence, optimizing general-purpose PPLs necessitates developing a broad suite of 
optimizations specific to many kinds of inference algorithms.

One particular family of inference algorithms in need of new optimizations 
is those of \emph{discrete} PPLs. Discrete PPLs such as
\dice{} and \texttt{ProbLog} enable new forms of inference since
they only apply to programs with discrete random
variables~\citep{holtzen2020scaling,fierens2012inference}.  The dominant form of
inference for these kinds of PPLs is \emph{exact inference via knowledge
compilation}~\citep{darwiche2002knowledge,sang2005performing,chavira2008probabilistic,
niepert2015learning, mateescu2008andor, chavira2006compiling, sanner2005affine}.
The core idea of the approach is to compile the probabilistic program 
into a representation that affords fast exact inference.  

The performance of exact inference via knowledge compilation is sensitive to the
way in which programs are written in ways that can be surprising or
unintuitive for novice programmers. We identify one particularly pernicious form
of program structure that is easy for a novice programmer to accidentally
introduce and has a negative impact on inference performance: \emph{redundant
random variables}. Two syntactic random variables are \emph{redundant} if they
have the same parameters and have inconsistent path conditions.
Redundancies do not affect the performance of sampling-based inference algorithms:
a single path through the program can never encounter both redundant random
variables. However, redundancies can significantly impact knowledge compilation 
based inference methods like \dice{} because it performs exact inference, and
hence implicitly represents the joint configuration of all random variables.
In this work, we propose a new family of probabilistic program optimizations called
\fh{} that unifies redundant random variables into a single random variable through
program analysis and transformation.
This optimization is specific to the knowledge compilation approach utilized by 
discrete PPLs such as \dice{}.
The key behind our approach is a sound but incomplete
\emph{branch-sensitive analysis} of probabilistic programs that statically
determines when it is safe to re-use a random variable. This analysis is
efficient in the size of the program and is effective on realistic examples.

In Section~\ref{sec:motiv} we give a motivating example that provides
intuition behind how \fh{} works. 
Section~\ref{sec:fliphoisting} formally describes the \fh{}
procedure and argues for its correctness. 
Section~\ref{sec:experiments} 
describes how to implement \fh{} for \dice{} and
demonstrates empirically that \fh{} 
is an effective optimization by showing (1) that practical problems 
that exhibit opportunities for \fh{} exist in the wild on examples from 
probabilistic graphical models and probabilistic verification; and (2)
that our automated \fh{} approach
reduces the runtime of \dice{} inference 
on these examples, giving new
state-of-the-art performance for probabilistic programming languages on these
tasks. Section~\ref{sec:related_work} gives an overview of related work.
Section~\ref{sec:conclusion} concludes.

\subsection{Motivating Example and Overview}
\label{sec:motiv}
\begin{figure}
% \centering
    \begin{subfigure}[t]{\linewidth}
        \begin{center}
        \begin{tabular}{c}
        \begin{lstlisting}[language=caml,escapechar=|, mathescape=true,
        numbers=left,
        stepnumber=1,
        basicstyle=\ttfamily\scriptsize]
let x = flip$_1$ 0.1 in
let z = flip$_2$ 0.2 in
let y = if x && z then|\label{line:if}|
    flip$_3$ 0.3
  else if x && !z then
    flip$_4$ 0.2
  else flip$_5$ 0.3
in y
        \end{lstlisting}
        \end{tabular}
        \end{center}
        % \vspace{0.85em}
        \caption{No optimizations.}
        \label{fig:simple-ex}
    \end{subfigure}
    \qquad
    
    \begin{minipage}{\linewidth}
        \centering
        \begin{subfigure}[t]{0.4\linewidth}
        \centering
            \begin{lstlisting}[language=caml,escapechar=|, mathescape=true,
            numbers=left,
            stepnumber=1,
            basicstyle=\ttfamily\scriptsize]
let x = flip$_1$ 0.1 in 
let z = flip$_2$ 0.2 in
let tmp = flip$_{3,5}$ 0.3 in
let y = if x && z then 
    tmp
  else if x && !z then 
    flip$_4$ 0.2
  else tmp
in y
            \end{lstlisting}
            \caption{Valid \flip{}-hoisting. }
            \label{fig:simple-ex-hoisted}
        \end{subfigure}
        \qquad  \quad
        \begin{subfigure}[t]{0.4\linewidth}
            \begin{lstlisting}[language=caml,escapechar=|, mathescape=true,
            numbers=left,
            stepnumber=1,
            basicstyle=\ttfamily\scriptsize]
let x = flip$_1$ 0.1 in 
let tmp = flip$_{2,4}$ 0.2 in
let z = tmp in
let y = if x && z then 
    flip$_3$ 0.3 
  else if x && !z then 
    tmp
  else flip$_5$ 0.3
in y
            \end{lstlisting}
            \caption{Invalid \texttt{flip}-hoisting.}
            \label{fig:invalid-ex}
        \end{subfigure}
    \end{minipage}

\caption{Examples of \fh{} on \dice{} programs. Subscripts on \flip{}s are only used to uniquely refer to them in our discussion.}
\label{fig:fig1}
\end{figure}

Throughout this paper, examples will be written in \dice{}~\citep{holtzen2020scaling}. 
\dice{} is a functional probabilistic programming language that
supports discrete random variables; in \dice{}, the syntax \flip{}~$\theta$
denotes a Boolean random variable that is true with probability~$\theta$. We introduce subscripts to \flip{}s in the examples as auxiliary notation to uniquely refer to them in our discussion.

Consider the example probabilistic program in Figure~\ref{fig:simple-ex}. 
The program encodes a simple distribution with a Boolean random variable \texttt{y} that
is conditioned on the values of \texttt{x} and \texttt{z}, which are themselves
Boolean random variables with probability $0.1$ and $0.2$ respectively.
The program has some potentially redundant \flip{}s; \texttt{flip 0.3} occurs twice in the program as \flip{}$_3$ and \flip{}$_5$.
So, we ask: can we optimize this program to a
version where these \texttt{flip 0.3} occurs exactly once? Intuitively, instead of flipping two coins, we would like
to flip a single coin instead, without changing the semantics of the program.
We can indeed do so by \emph{hoisting}, wherein we introduce a variable \texttt{tmp} in order
to use a single \flip{} in place of the original two occurrences. In Figure~\ref{fig:simple-ex-hoisted}, we show the optimized version of the program after hoisting
\flip{}$_3$ and \flip{}$_5$.

\begin{figure}[t]
% \centering
    \begin{subfigure}[t]{0.45\columnwidth}
        \centering
        \begin{tikzpicture}[x=0.5cm,y=0.9cm]
            \tikzset{vertex/.style = {shape=circle,draw,minimum size=0.8em, font=\small}}
            \tikzset{square/.style = {shape=rectangle, draw, font=\small}}
            \tikzset{tedge/.style = {-}}
            \tikzset{fedge/.style = {dash pattern=on 2pt off 1pt}}
            
            % nodes
            \node[vertex] (f1) at  (0,0) {f$_1$};
            \node[vertex] (f2) at  (-1,-1) {f$_2$};
            \node[vertex, fill=blue] (f3) at  (-2,-2) {f$_3$};
            \node[vertex] (f4) at  (0,-2) {f$_4$};
            \node[vertex, fill=blue] (f5) at  (2,-2) {f$_5$};
            \node[square] (t) at (-1,-3) {T};
            \node[square] (f) at (1,-3) {F};
            %edges
            \draw[tedge] (f1) to (f2);
            \draw[tedge] (f2) to (f3);
            \draw[fedge] (f2) to (f4);
            \draw[fedge] (f1) to (f5);
            \draw[tedge] (f3) to (t);
            \draw[fedge] (f3) to (f);
            \draw[tedge] (f4) to (t);
            \draw[fedge] (f4) to (f);
            \draw[tedge] (f5) to (t);
            \draw[fedge] (f5) to (f);
        \end{tikzpicture}
        \caption{BDD of Figure~\ref{fig:simple-ex}, the original program.}
        \label{fig:simple-ex-bdd}
    \end{subfigure}
    ~
    \quad
    \begin{subfigure}[t]{0.45\columnwidth}
        \centering
        \begin{tikzpicture}[x=0.5cm,y=0.9cm]
            \tikzset{vertex/.style = {shape=circle,draw,minimum size=1em, font=\small}}
            \tikzset{square/.style = {shape=rectangle, draw, font=\tiny}}
            \tikzset{tedge/.style = {-}}
            \tikzset{fedge/.style = {dash pattern=on 2pt off 1pt}}
            
            % nodes
            \node[vertex] (f1) at  (0,0) {f$_1$};
            \node[vertex] (f2) at  (-1,-1) {f$_2$};
            \node[vertex, scale=0.8, fill=blue] (f35) at  (2,-2) {f$_{3,5}$};
            \node[vertex] (f4) at  (0,-2) {f$_4$};
            \node[square] (t) at (-1,-3) {T};
            \node[square] (f) at (1,-3) {F};
            %edges
            \draw[tedge] (f1) to (f2);
            \draw[tedge] (f2) to (f35);
            \draw[fedge] (f2) to (f4);
            \draw[fedge] (f1) to (f35);
            \draw[tedge] (f35) to (t);
            \draw[fedge] (f35) to (f);
            \draw[tedge] (f4) to (t);
            \draw[fedge] (f4) to (f);
        \end{tikzpicture}
        \caption{BDD of Figure~\ref{fig:simple-ex-hoisted}, the program with valid \flip{}-hoisting.}
        \label{fig:simple-ex-hoisted-bdd}
    \end{subfigure}
\caption{BDDs of example programs in Figure~\ref{fig:fig1}.}
\label{fig:fig1-bdd}
\end{figure}

Removing redundant \texttt{flip}s is an optimization that specifically improves the 
performance of knowledge compilation, similar to how centering is an optimization unique to improving HMC~\citep{gorinova2020automatic}. To see this, we will inspect some details of how 
\dice{} inference works. \dice{} compiles 
probabilistic programs into \textit{binary decision diagrams} (BDDs) and computes
queries through \emph{weighted model counting}~\citep{holtzen2020scaling}. 
Each of the nodes in a BDD
corresponds to a \flip{} in the program; the solid edge leaving a node has a weight equal to the
parameter $\theta$ of the \flip{} and the dotted edge leaving a node has a weight equal to 
$1 - \theta$. Performing weighted model counting on a BDD then includes tracing through each path
from the leaf nodes to the root.
Figure~\ref{fig:simple-ex-bdd} shows the BDD of Figure~\ref{fig:simple-ex} with each node labeled with the \flip{} subscripts. 
When a \flip{} is hoisted
the BDD consequently \emph{reduces in size} as redundant nodes are removed.
Combining \flip{}$_3$ and \flip{}$_5$ results in the node being shared in the corresponding BDD in Figure~\ref{fig:simple-ex-hoisted-bdd},
resulting in a smaller BDD. The size of the BDD is the critical 
component in determining the ultimate runtime of \dice{} inference, and 
so reducing size has practical improvements on runtime~\citep{holtzen2020scaling}.

We have shown that \texttt{flip 0.3} can be optimized to occur only once instead of twice. 
Can we do the same for \texttt{flip 0.2}, which also occurs twice in the example program?
The answer is \emph{no} -- sound hoisting is not a simple matter of unifying all
syntactically identical random variables. Consider the \emph{invalid} program in
Figure~\ref{fig:invalid-ex}, in which we hoist \flip{}$_2$ and \flip{}$_4$.
This hoisting of the two \texttt{flip}s changes the semantics of the
original program. In the original program it is possible that \texttt{x = true,
z = false}, and \texttt{y = true}; this assignment is not possible in the
hoisted version since \texttt{y} is constrained to be equal to \texttt{z} in
this case. This invalid hoisting introduced a \emph{spurious dependence} between
the two \texttt{flip}s, coupling them when the semantics of the original program
depended on them being decoupled. The spurious dependence induces incorrect inference
results as the semantics of the original program has been altered.

What is a sufficient condition for ensuring that a hoisting is valid? One is
that there is no path through the probabilistic program that encounters both
\texttt{flip}s; we call such \texttt{flip}s \emph{redundant}. This is
the case for \texttt{flip 0.3}. The key observation is that the
\texttt{if}-expression on Line~\ref{line:if} in the original program in Figure~\ref{fig:simple-ex} has mutually exclusive branches, guaranteeing
that the path conditions for the two separate instances of \texttt{flip 0.3}
are inconsistent; in this case, it is safe to flip
a single coin instead of two without changing the semantics of the program. 
We note that while our motivating example only has Bernoulli random variables,
our approach naturally applies to categorical distributions
as well, since they can be represented as a sequence of 
Bernoulli random variables~\citep{holtzen2020scaling,sang2005performing}.

In this paper we give the first analysis for identifying and merging redundant
\texttt{flip}s in probabilistic programs. In general, determining if it is safe
to merge two \texttt{flip}s is a special case of \emph{reachability analysis},
and so is undecidable for general programs. Hence, we identify a sound
but incomplete strategy that is effective for existing \dice{} programs~\citep{aho1986compilers}. 
For example, our approach is able to perform the hoisting
shown in Figure~\ref{fig:simple-ex-hoisted} automatically, and never performs the
invalid hoisting shown in Figure~\ref{fig:invalid-ex}.
We will show that \fh{} is efficient in the size of the program, prove that it
is sound, and show empirically that it improves the performance of existing probabilistic
programs derived from graphical models and verification.

%%%
\section{\texttt{flip}-HOISTING}
\label{sec:fliphoisting}

In this section we formally introduce \fh{}. In order to unambiguously refer to
\flip{}s in a program, we assume that each one has a unique
identifier; we assign these identifiers as subscripts as shown in
Figure~\ref{fig:fig1}. We denote syntactic probabilistic programs as $\prog$ and let $\dbracket{\prog}$ denote the probability distribution on the values
returned by $\prog$.

\emph{Hoisting}, denoted $\hoist{}(\prog, i, j)$, is the function that transforms a program $\prog$ by inserting a new \flip{} -- the hoisted \flip{} -- at an ancestor
node of \flip{}$_i$ and \flip{}$_j$ in the abstract syntax tree (AST) -- $\ancestor{}(\prog, i, j)$ -- and replacing \flip{}$_i$ and
\flip{}$_j$ with a reference to the hoisted \flip{}. $\hoist{}(\prog, i, j)$
assumes \flip{}$_i$ and \flip{}$_j$ have the same parameter.
\footnote{In general there are multiple possible places where the new 
\flip{} can be inserted, but that is irrelevant here.} 

For instance, if $\prog_{ex}$
is the program in Figure~\ref{fig:simple-ex}, then $\hoist{}(\prog_{ex}, 3, 5)$
outputs the program in Figure~\ref{fig:simple-ex-hoisted}. Hoisting itself is
efficient and implementable as a single pass over the syntax of the program; the
challenge is knowing when hoisting is \emph{sound}:

\begin{definition}[Sound hoisting]
  For a probabilistic program $\prog$, hoisting \flip{}s $i$ and $j$ is sound
if $\dbracket{\prog} = \dbracket{\hoist{}(\prog, i, j)}$.
\end{definition}

This definition does not yield an algorithm because determining whether a
hoisting is sound is hard: it requires reasoning about whether or not two
probabilistic programs are equivalent, which is a challenging computational
task. Hence we require an assumption that aids in implementation. One route
is analyzing the \emph{path conditions} for each \flip{}, a familiar concept
from symbolic execution~\citep{king1976symbolic}:

\begin{definition}[Path conditions]
  The \emph{path conditions} $\pathcond{}(\prog, i)$ for \flip{} $i$ in probabilistic
  program $\prog$ is the set of necessary and sufficient conditions on \flip{}s that
  ensure execution reaches \flip{} $i$.
\end{definition}
For instance, $\pathcond(\prog_{ex}, 3) = \flip{}_1~\land~
\flip_2$, since if \flip{}$_3$ is executed then the condition of the
\texttt{if}-statement on Line~\ref{line:if} of Figure~\ref{fig:simple-ex} must be true, which implies that
these two \flip{}s are true. The path conditions yield a more tractable test
for \flip{} redundancy:

\begin{theorem}[Path redundancy]
  For program $\prog$, hoisting \flip{}s $i$ and $j$ is sound if the \flip{}s have the same
  parameter value and
  $\pathcond(\prog, i)$ is inconsistent with $\pathcond(\prog, j)$.
  \label{thm:pathred}
\end{theorem}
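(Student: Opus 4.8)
The plan is to reduce the claim to a purely combinatorial statement about marginalizing out an unreached \flip{}, after isolating two semantic lemmas. I would first fix a sampling semantics in which $\dbracket{\prog}(v) = \sum_{\omega} \Pr(\omega)\,\mathbb{1}[\mathrm{eval}(\prog,\omega)=v]$, where $\omega$ is an independent Bernoulli draw for every \flip{} id, $\Pr(\omega)$ is the product of the per-\flip{} weights, and $\mathrm{eval}(\prog,\omega)$ is the deterministic value computed once all coins are fixed. Writing $\theta$ for the shared parameter of \flip{}s $i$ and $j$, I split $\omega$ into the draws $x_i,x_j$ for \flip{}s $i,j$ and the draws $\omega_S$ for all remaining \flip{}s, which are common to $\prog$ and $\hoist{}(\prog,i,j)$.

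Two lemmas carry the semantic content. \textbf{(i) Structural lemma:} since $\hoist{}$ only redirects the references to \flip{}s $i$ and $j$ to a single fresh \flip{} $n$ of weight $\theta$, evaluating the hoisted program with $n\mapsto x$ yields exactly $\mathrm{eval}(\prog,\omega_S,x,x)$; that is, the hoisted program computes $\mathrm{eval}$ restricted to the diagonal $x_i=x_j$. \textbf{(ii) Reachability lemma:} from the necessary-and-sufficient characterization in the definition of $\pathcond{}$, if the path condition of \flip{} $i$ is false under $\omega$ then \flip{} $i$ is not reached and $\mathrm{eval}$ does not depend on $x_i$; moreover the guards on the path to \flip{} $i$ are evaluated before \flip{} $i$, so its path condition does not itself mention $x_i$. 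Letting $R_i,R_j$ be the reachability indicators, this means $R_i$ does not depend on $x_i$ and $R_j$ does not depend on $x_j$.

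With these in hand the core argument is short. The inconsistency hypothesis says no assignment satisfies both path conditions, i.e.\ $R_i(\omega)\,R_j(\omega)=0$ for every $\omega$. Combined with the independences from lemma (ii), this forces, for each fixed $\omega_S$, at least one of the two \flip{}s to be unreachable for every choice of the other coin: otherwise a witness $x_j^{*}$ with $R_i=1$ and a witness $x_i^{*}$ with $R_j=1$ could be combined into a single assignment reaching both, a contradiction. Fixing $\omega_S$ and assuming without loss of generality that \flip{} $j$ is unreachable, lemma (ii) gives $\mathrm{eval}(\prog,\omega_S,x_i,x_j)=\mathrm{eval}(\prog,\omega_S,x_i,x_i)$ for all $x_i,x_j$; summing over $x_j$ (whose weights total $1$) collapses the double sum over $x_i,x_j$ to a single sum over the diagonal, which by lemma (i) and $x_i\sim\mathrm{Ber}(\theta)$ is exactly the $\omega_S$-slice of $\dbracket{\hoist{}(\prog,i,j)}$. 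Summing over $\omega_S$ then yields $\dbracket{\prog}=\dbracket{\hoist{}(\prog,i,j)}$; this last step is where the equal-parameter hypothesis is essential, since the collapsed coin must carry weight $\theta$.

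I expect the main obstacle to lie in the semantic lemmas rather than in the combinatorics: making (ii) precise requires formally connecting the symbolic ``necessary and sufficient conditions to reach \flip{} $i$'' to the operational fact that an unreached \flip{} cannot influence the returned value, and (i) requires pinning down the action of $\hoist{}$ on the evaluation relation. Once both are stated, the marginalization argument is routine.
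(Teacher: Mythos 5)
Your proposal is correct, and it reaches the conclusion by a genuinely different formalization than the paper's. The paper argues operationally: it defines a \emph{path} as a total assignment to the \flip{}s actually encountered during an execution, and establishes a probability-preserving one-to-one correspondence between the paths of $\prog$ and those of $\hoist{}(\prog,i,j)$, using inconsistency of the path conditions only to conclude that no single path contains both \flip{}s $i$ and $j$. You instead work in the full product space over \emph{all} \flip{} draws and marginalize out the coin belonging to whichever of the two \flip{}s is unreached, collapsing the sum onto the diagonal $x_i=x_j$ that your structural lemma identifies with the hoisted program. The two arguments hinge on the same consequence of the hypothesis, but your route makes explicit two things the paper's sketch leaves implicit: (a) that an unreached \flip{} cannot influence the returned value (the paper buries this in ``this information uniquely determines the result''), and (b) that for each fixed $\omega_S$ one of the two \flip{}s is unreachable \emph{uniformly} in the other's coin value --- your witness-combination step, which uses the fact that $R_i$ does not depend on $x_i$ and $R_j$ does not depend on $x_j$; this uniformity is exactly what licenses summing out the unused coin, and it has no analogue in the path-bijection picture because there the unused coin is simply never sampled. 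The trade-off is that your version is closer to a fully rigorous proof (the remaining work is concentrated in the two clearly stated semantic lemmas), while the paper's version is shorter and tracks the operational intuition of ``no execution exercises both \flip{}s'' more directly. Your placement of where the equal-parameter hypothesis is used (the collapsed coin must carry weight $\theta$ regardless of which of the two cases holds per $\omega_S$) is also correct and matches the role it plays, implicitly, in the paper's probability-preservation claim for corresponding paths.
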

\begin{proof}[Proof sketch]
  A \emph{trace} $\tau$ through a program $\prog$ is a total assignment to \flip{}s
  encountered during an execution of the program; for example, one trace through
  Figure~\ref{fig:simple-ex} is $\tau = \{\flip{}_1 = true, \flip{}_2 = false,
  \flip{}_4 = true\}$. This information uniquely determines the result of the
  execution along this trace and the probability of the trace.
  
  There is a bijection between the traces of $\prog$ and the
  traces of $\hoist{}(\prog, i, j)$ that ensures that the programs are
  equivalent. In other words, a tuple of traces is in the bijection if both the resulting values and probabilities of the traces are equivalent. Thus, the bijection implies that the two programs have the same resulting probability distributions. 

  The most interesting traces are those that encounter one of the hoisted
  $\flip{}$s --- we will give a bijection between the traces of $\prog$ and
  $\hoist{}(\prog, i, j)$ for this case. 
  Consider a trace $\tau$ through $\prog$ that includes $\flip{}_i$.  Since
  $\pathcond(\prog, i)$ is inconsistent with $\pathcond(\prog, j)$, it must 
  be the case that $\tau$
  does not include $\flip{}_j$. 
  Now let $\prog_h = \hoist{}(\prog, i, j)$, and assume that $\flip{}_i$ is
  hoisted to and relabeled to a new variable $\flip{}'$ in the hoisted program.
  By the definition of the $\hoist{}$
  function there exists a trace $\tau'$ through $\hoist{}(\prog, i, j)$ that is
  identical to $\tau$
  but with $\flip{}_i$ replaced by $\flip{}'$
  with the same value in the trace.  Similarly, for each trace through
  $\hoist{}(\prog, i, j)$ that goes through the line of code where $\flip{}_i$
  was in $\prog$, there exists an equivalent trace in $\prog$ containing
  $\flip{}_i$ in place of $\flip{}'$.
\end{proof}

Path redundancy reduces the problem of checking hoisting soundness to
satisfiability, which is still too computationally hard to be implemented as a
practical optimization.
Next, we consider two strengthenings of path redundancy that capture common
cases occurring in probabilistic programs; these strengthenings 
yield efficient soundness checks.

\subsection{Local Hoisting}
One of the strictest strengthenings of path redundancy is \emph{local
redundancy}, which avoids the need to reason about the intricacies of
\texttt{if}-statement conditions altogether.

\begin{definition}[Locally redundant \flip{}s]
  Two \flip{}s $i$ and $j$ are \emph{locally redundant} if they have the same
parameter value and appear in disjoint branches of the same
if-statement.
\end{definition}

Determining whether or not two \flip{}s are locally redundant is an efficient
syntactic check on the program. Figure~\ref{fig:simple-ex} gives an example
where \flip{}$_3$ and \flip{}$_5$ are locally redundant: they occur
in disjoint branches of the \texttt{if}-expression on Line~\ref{line:if}.

\begin{proposition}
  It is sound to hoist locally redundant \flip{}s.
\end{proposition}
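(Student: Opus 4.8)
The plan is to reduce this proposition directly to Theorem~\ref{thm:pathred}. That theorem tells us that hoisting \flip{}s $i$ and $j$ is sound whenever they share a parameter value and $\pathcond(\prog, i)$ is inconsistent with $\pathcond(\prog, j)$. The first requirement is given to us for free: the definition of local redundancy already stipulates that $i$ and $j$ have the same parameter value. So the entire content of the proposition is the implication \emph{locally redundant $\Rightarrow$ inconsistent path conditions}, and establishing that implication is all the work that remains.

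First I would fix notation. Let \flip{}s $i$ and $j$ be locally redundant, and let $\mathtt{if}~g~\mathtt{then}~E_1~\mathtt{else}~E_2$ be the common \texttt{if}-expression witnessing their local redundancy, with (without loss of generality) \flip{} $i$ occurring somewhere inside the then-branch $E_1$ and \flip{} $j$ inside the else-branch $E_2$. The key observation is that the guard $g$ acts as a gatekeeper for each branch: any execution that reaches \flip{} $i$ must first have taken the then-branch and therefore evaluated $g$ to \texttt{true}, while any execution reaching \flip{} $j$ must have taken the else-branch and evaluated $g$ to \texttt{false}. Because $\pathcond(\prog, i)$ is, by definition, a necessary and sufficient condition on the \flip{}s for reaching \flip{} $i$, this gatekeeping means that every \flip{}-assignment satisfying $\pathcond(\prog, i)$ forces $g$ true, and every assignment satisfying $\pathcond(\prog, j)$ forces $g$ false. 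No single \flip{}-assignment can do both, so $\pathcond(\prog, i)$ and $\pathcond(\prog, j)$ are inconsistent, and Theorem~\ref{thm:pathred} then delivers soundness.

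The main obstacle is making the gatekeeper step fully rigorous, and it hides two subtleties. The first is that the guard $g$ is written over ordinary program variables rather than over \flip{}s directly; however, in the fragment we analyze these variables are deterministic functions of previously executed \flip{}s, so $g$ still denotes a well-defined predicate on \flip{}-assignments, and it is meaningful to say that a path condition forces it true or false. The second is verifying that reaching a \flip{} syntactically inside $E_1$ genuinely compels the then-branch to be taken --- this is exactly where disjointness of the branches is essential, since it rules out any alternative route to \flip{} $i$ that bypasses the guard, and likewise for \flip{} $j$ in $E_2$. Once these two points are discharged, the guard provably takes opposite truth values along any path to $i$ and any path to $j$, the inconsistency of the path conditions is immediate, and the appeal to path redundancy closes the argument.
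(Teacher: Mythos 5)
Your proof is correct and follows exactly the same route as the paper's: reduce to Theorem~\ref{thm:pathred} by observing that locally redundant \flip{}s have inconsistent path conditions. The paper states this in one line; you additionally spell out \emph{why} the path conditions are inconsistent (the shared guard must be true on any path to one \flip{} and false on any path to the other), which is a faithful elaboration of the same argument.
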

The proof follows from Theorem~\ref{thm:pathred} and the fact that two locally
redundant \flip{}s by definition have inconsistent path conditions.

While seemingly simple, local hoisting is already a surprisingly powerful and
general optimization. 
Further,
local hoisting applies to a surprisingly broad set of programs including those
encoding Bayesian networks as well as existing programs from probabilistic verification.

\subsection{Global Hoisting}
\label{s:global-hoisting}

\begin{figure}
\centering
\begin{minipage}{0.4\columnwidth}
\centering
\begin{subfigure}[t]{\columnwidth}
  \centering
  \begin{lstlisting}[language=caml, 
    numbers=left,
    stepnumber=1, basicstyle=\ttfamily\scriptsize, mathescape=true]
let x = flip$_1$ 0.1 in 
let y = if x then 
    flip$_2$ 0.2 
  else flip$_3$ 0.3 in
let z = if !x then 
    flip$_4$ 0.2 
  else flip$_5$ 0.4 in 
(y, z)
\end{lstlisting}
\caption{Example unhoisted program $\prog_g$.}
\label{fig:global-ex}
\end{subfigure}\\

\begin{subfigure}[t]{\columnwidth}
  \centering
  \begin{lstlisting}[language=caml, 
    numbers=left,
    stepnumber=1, basicstyle=\ttfamily\scriptsize, mathescape=true]
let x = flip$_1$ 0.1 in 
let tmp = flip$_{2, 4}$ 0.2 in
let y = if x then tmp 
  else flip$_3$ 0.3 in
let z = if !x then tmp 
  else flip$_5$ 0.4 in 
(y, z)
    \end{lstlisting}
    \caption{Global \flip{}-hoisting. }
    \label{fig:global-ex-hoisted}
  \end{subfigure}
\end{minipage}
\qquad \quad
\begin{minipage}{0.4\columnwidth}
\centering
\begin{subfigure}[b]{\columnwidth}
  \centering
  \begin{lstlisting}[language=caml,
    numbers=left,
    stepnumber=1, escapechar=|, basicstyle=\ttfamily\scriptsize, mathescape=true]
  
let x = flip$_1$ 0.1 in 
|\tiny\colorbox{orange}{{\{x=1}\}}|
let y = if x then 
  |\tiny\colorbox{cyan}{\{1=true\}}| flip$_2$ 0.2 |\label{line:constraint}|
  else 
  |\tiny\colorbox{cyan}{\{1=false\}}| flip$_3$ 0.3 
in
let z = if !x then 
  |\tiny\colorbox{cyan}{\{1=false\}}| flip$_4$ 0.2 
  else  
  |\tiny\colorbox{cyan}{\{1=true\}}| flip$_5$ 0.4 
in (y, z)
    \end{lstlisting}
    \centering
    \caption{The program $\prog_g$ annotated with data-flow facts.}
    \label{fig:global-ex-annot}
  \end{subfigure}
\end{minipage}
\caption{Example of global \fh{}.}
\label{fig:fig3}
\end{figure}

Local hoisting identifies hoisting opportunities within a single \texttt{if}-expression. In this section we develop a separate
analysis, which we call \emph{global hoisting}, that retains the tractability of local hoisting while finding hoisting
opportunities that span multiple \texttt{if}s. 

Consider the minimal example in Figure~\ref{fig:global-ex} that we label
$\prog_g$. In this program \flip{}$_2$ and \flip{}$_4$ are path redundant since
$\pathcond(\prog_g, 2) = \flip{}_1$ and $\pathcond(\prog_g, 4) = \neg\flip{}_1$,
which are clearly inconsistent logical sentences. However, these two
\flip{}s are \emph{not} locally redundant. We would like to efficiently certify
that it is safe to hoist these \flip{}s and others similar to them.
To accomplish this we will perform a form of \emph{data-flow analysis} on
the program~\citep{aho1986compilers}. The essence behind a data-flow analysis is to traverse the program
and collect a set of facts -- stated as logical propositions -- that hold at
each point in the program. By suitably constraining the structure of these facts
we ensure that the analysis is efficient.

Figure~\ref{fig:global-ex-annot} shows $\prog_g$ annotated with the data-flow
facts required to perform global hoisting. We track two kinds of facts: (1)
\emph{aliasing facts}, marked with orange boxes,
that relate local variables to the \flip{}s that they must be equal to; and (2)
\emph{constraint facts}, in cyan boxes, that list
assignments to \flip{}s that are implied by \texttt{if}-statement conditions.
For instance, at the point where \flip{}$_2$ occurs it must be the case
that \texttt{\{1 = true\}} because we know from the aliasing facts that
\texttt{x} is assigned to be equal to \flip{}$_1$, and we know that the
condition constrains \texttt{x} to be \texttt{true} since the branch was taken. The data-flow analysis records these aliasing facts and constraint facts in a linear pass through the AST.

Once the data-flow analysis is complete, it is straightforward to use the set of
constraint facts that hold at each \flip{} to determine when it is safe to
hoist. To check if \flip{}$_i$ and \flip{}$_j$ are redundant, we check if their corresponding
constraint facts are inconsistent, determined by if they disagree on any
assignments to literals, which is efficient. This is done with a simple iteration through the constraint facts of \flip{}s with equal parameters to determine if there is a variable assigned to \texttt{true} in one and to \texttt{false} in the other. By
definition, inconsistency of constraint facts implies path redundancy of the
two \flip{}s, and hence hoisting these two will be sound.

There are a few more details about the data-flow analysis that are necessary to
make it work. First, in order to efficiently construct the constraint facts, we
only derive such facts from the conditions of \texttt{if}-statements that are \emph{conjunctions of
variables} for the \texttt{then} branch and \textit{disjunctions of variables} for the \texttt{else} branch, which can be analyzed in a simple linear pass; for other more complex conditions, we take a best-effort approach, recording partial facts. Next, at \emph{join points} -- the points in the program at which two
branches merge back into a single flow of execution -- we take the
intersection of all data-flow facts to conservatively ensure
soundness.

%%% 
\section{IMPLEMENTATION \& EVALUATION}
\label{sec:experiments}
In order for an optimization to be useful it must (1) be fast to implement in practice; (2)
have a potential to significantly improve performance; and (3) benefit existing
practical programs.
In this section we show
these three criteria hold for \fh{}.
First, Section~\ref{sec:implementation} shows how to implement \fh{} in an 
efficient way for the \dice{} system~\citep{holtzen2020scaling} -- 
this implementation requires a minor generalization to \fh{} that preserves an 
additional \emph{ordering invariant}. With this invariant we can prove 
that \fh{} can help and never hurt \dice{}'s inference performance, making it 
useful as a ``run-by-default'' optimization. Next Section~\ref{sec:synthetic} 
gives a best-case scenario on a synthetic set of programs and shows that 
\fh{} has the potential to speed up inference significantly. 
Finally, Section~\ref{sec:realistic} evaluates \fh{} on a 
broad family of existing probabilistic programs and shows that
\fh{} never hurts and sometimes significantly helps performance. It also shows
that there exist programs in the 
wild -- such as programs from the literature on probabilistic 
verification~\citep{HoltzenCAV21} -- that naturally
exhibit opportunities to hoist \flip{}s and benefit significantly from the optimization.~\footnote{All our code will be released 
as open-source. Our experiments were carried out on a Xeon E5-2640 CPU with 512GB RAM.}

\subsection{Implementation in \dice{}}
\label{sec:implementation}

To validate the effectiveness of this optimization in practice we implemented it
in the \dice{} probabilistic programming system~\citep{holtzen2020scaling}. 
\dice{} supports exact inference and
supports procedures, control-flow, categorical distributions on discrete values, and bounded
recursion.  \dice{} works by compiling a probabilistic program
into a \emph{binary decision diagram} (BDD), a tractable probabilistic model
that supports linear-time probabilistic
inference~\citep{darwiche2002knowledge,ProbCirc20}.

Unlike many other PPLs, the performance of \dice{} is 
extremely sensitive to the \emph{order in which variables are introduced in the
program}~\citep{holtzen2020scaling}. This is because the variable ordering of 
the BDD is dictated by this order, and the size of the compiled BDD -- which
is one of the primary factors that determines the runtime performance of
inference for the program -- is heavily influenced by the variable order. For
example, in Figure~\ref{fig:simple-ex}, the BDD variable corresponding to
\flip{}$_1$ will appear in the variable order before the BDD variable corresponding to \flip{}$_2$, as they are always executed in that order.

Ultimately our goal is to design an optimization that can never hurt -- and
often help -- inference performance. Hence, it is critical in the context of
\dice{} that \fh{} maintains the variable ordering of the program.
We therefore extend the \fh{}
transformation so that whenever it hoists a \flip{} it also hoists earlier
\flip{}s as necessary to maintain the original order.

\begin{definition}[Ordering-maintaining]
  If the ordering of the \flip{} definitions are the same between $\prog{}$ and $\hoist{}(\prog, i, j)$ -- where branches of \texttt{if}-expressions are considered interchangeable in ordering -- then $\hoist{}(\prog, i, j)$ is ordering-maintaining. 
\end{definition}

We implement ordering-maintaining \fh{} by also hoisting all the \flip{}s defined in between $\ancestor{}(\prog, i, j)$ and \flip$_i$ and between $\ancestor{}(\prog, i, j)$ and \flip$_j$ to be defined right before the hoisted \flip{}.
To minimize this extra work, we implement $\ancestor{}(\prog, i, j)$ to always return the lowest common ancestor (LCA). 
% We emphasize here that we consider \textit{only} knowledge compilation based PPLs like \dice{}, \textit{not} sampling based languages. Hence, no matter where the new \flip{} is defined, there are no extra variables introduced; if a \flip{} is not used in a trace, the node is not in the root-to-leaf path of the trace in the BDD, effectively ignored for that trace, as seen in Figure~\ref{fig:fig1-bdd}.
With order-maintaining 
\fh{}, we can state the following:

\begin{theorem}
  \label{thm:size}
  Let $|\prog|$ be the number of nodes in the BDD compiled from $\prog$ and let
$i$ and $j$ be \flip{} indexes in $\prog$. If hoisting preserves the program
variable order, then $|\hoist{}(\prog, i, j)| \le |\prog|$.
\end{theorem}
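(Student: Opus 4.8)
The plan is to show that the reduced, ordered BDD compiled for $\hoist{}(\prog, i, j)$ can be obtained from the BDD compiled for $\prog$ by an operation that merges two variables into one and then re-reduces, and that neither step increases the node count. Let $B_\prog$ denote the reduced ordered BDD compiled from $\prog$ under the program's variable order $\pi$, so that by the theorem's definition $|\prog| = |B_\prog|$, and let $x_k$ be the BDD variable associated with \flip{}~$k$. Let $\prog' = \hoist{}(\prog, i, j)$, compiled under the variable order $\pi'$ produced by order-preserving hoisting, in which the single fresh \flip{} created by hoisting is assigned one variable $x$ while every other \flip{} keeps its relative position. The three facts I would establish are: (i) the diagram obtained from $B_\prog$ by relabeling every $x_i$-node and every $x_j$-node with the merged variable $x$ is a legal ordered BDD with respect to $\pi'$; (ii) it computes the same function that \dice{} compiles for $\prog'$; and (iii) reducing it yields exactly $B_{\prog'}$.

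First I would build the intermediate diagram $\tilde{B}$ by taking $B_\prog$ and relabeling the variable of every node testing $x_i$ or $x_j$ to the merged variable $x$, leaving the graph structure untouched. This relabeling is a bijection on nodes, so $|\tilde{B}| = |B_\prog| = |\prog|$. The key structural input is that \flip{}s $i$ and $j$ are path-redundant: any root-to-leaf path of $B_\prog$ that tests $x_i$ corresponds to an execution satisfying $\pathcond{}(\prog, i)$, and any path testing $x_j$ satisfies $\pathcond{}(\prog, j)$; since these conditions are inconsistent, no single path tests both variables. Consequently, along every individual path the relabeling creates no ordering conflict, and order preservation supplies a single global position for $x$ that is simultaneously consistent with all such paths, so $\tilde{B}$ is a valid ordered BDD for $\pi'$. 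That $\tilde{B}$ represents the same distribution as the compilation of $\prog'$ is the semantic content already captured by Theorem~\ref{thm:pathred}: forcing the two never-co-occurring \flip{}s to share one coin (equating their variables) leaves $\dbracket{\prog}$ unchanged, and the shared parameter value makes the weight function well defined.

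Finally I would invoke the canonicity and minimality of reduced ordered BDDs: for a fixed function and variable order there is a unique reduced diagram, and reduction --- merging duplicate nodes and eliminating redundant tests --- never increases the node count. Since $\tilde{B}$ and $B_{\prog'}$ compute the same function under $\pi'$, the reduced form of $\tilde{B}$ is exactly $B_{\prog'}$, whence $|\prog'| = |B_{\prog'}| \le |\tilde{B}| = |\prog|$, which is the claim.

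I expect the main obstacle to be fact (i): verifying that the relabeled diagram is genuinely an ordered BDD under $\pi'$. The danger is a third \flip{} whose variable lies strictly between $x_i$ and $x_j$ in $\pi$ on some path, which after merging would force $x$ to appear both before and after it. Path redundancy rules this conflict out path-by-path, but it is precisely the role of order preservation to certify that a single consistent global placement of $x$ exists; pinning down that this is exactly what the order-preservation hypothesis guarantees is the crux of the argument, and it is also the only place where that hypothesis is used.
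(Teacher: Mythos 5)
Your argument is correct and follows essentially the same route as the paper's own proof sketch: relabel the nodes for \flip{}s $i$ and $j$ in the compiled BDD with a single merged variable (a size-preserving step), then observe that reduction to canonical form can only decrease the node count, with canonicity identifying the result as the BDD of the hoisted program. You simply make explicit two points the paper leaves implicit --- that path redundancy guarantees no root-to-leaf path tests both variables (so the relabeled diagram is still ordered) and that order preservation supplies a consistent global position for the merged variable --- which is a welcome elaboration rather than a different approach.
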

We will sketch a proof here to avoid going into the details of \dice{}
compilation.  Since the order of variables in the program is unchanged after
hoisting, so too is the order of variables in the compiled BDD.  Then, hoisting
can be thought of as relabeling \flip{}s $i$ and $j$ to have a common label. 
This does not change the size of the BDD. The resulting BDD will then be reduced
to canonical form, which may decrease the size; this is where we may profit from
\fh{}, as there may be more compression opportunities.

Hence, order-preserving hoisting can never hurt -- but can often help, as we
will see -- \dice{} compilation performance. One thing to note is that
it is not always possible to perform an order-preserving hoisting. In
particular, for global hoisting, there can be the situation that redundant \flip{}s
cannot be hoisted without breaking some ordering for the \flip{}s in between the
redundant \flip{}s. For example, suppose there is a \flip{}$_i$ and a \flip{}$_j$ with the same parameters and inconsistent path conditions. Let \flip{}$_i$ be defined strictly before \flip{}$_j$. If there is a \flip{}$_k$ defined in between them, \flip{}$_i$ and \flip{}$_j$ cannot be hoisted. \flip{}$_i$ would demand \flip{}$_k$ be inserted \textit{after} the newly hoisted variable, but \flip{}$_j$ needs \flip{}$_k$ to be inserted \textit{before} the newly hoisted variable, producing conflicting orderings. So, to satisfy the conditions of Theorem~\ref{thm:size}, it is occasionally necessary to forego hoisting opportunities to preserve order.

\begin{figure}
\centering
\begin{subfigure}[b]{0.8\columnwidth}
  % \centering
    \hspace{-1em}
    %auto-ignore
% This file was created with tikzplotlib v0.9.12.

\pgfplotstableset{row sep=crcr}
\begin{tikzpicture}

\definecolor{color0}{rgb}{0.274509803921569,0.509803921568627,0.705882352941177}
\definecolor{color1}{rgb}{0.603921568627451,0.803921568627451,0.196078431372549}

\begin{axis}[
height=100px,
legend cell align={left},
legend style={
  fill opacity=0.8,
  draw opacity=1,
  text opacity=1,
  at={(0.03,0.97)},
  anchor=north west,
  draw=white!80!black
},
tick align=outside,
tick pos=left,
x grid style={white!69.0196078431373!black},
xlabel={$n$},
xmajorgrids,
xmin=-0.65, xmax=13.65,
xtick style={color=black},
y grid style={white!69.0196078431373!black},
ylabel={Time (s)},
ymajorgrids,
ymin=-0.63864, ymax=14.02524,
ytick style={color=black},
width=\textwidth,
font=\small
]
\addplot [semithick, color0, mark=*, mark size=3, mark options={solid}]
table {%
0 0.0211000442504883\\
1 0.0273000001907349\\
2 0.0290999412536621\\
3 0.0319000482559204\\
4 0.0559999942779541\\
5 0.0564000606536865\\
6 0.0722000598907471\\
7 0.139299988746643\\
8 0.302000045776367\\
9 0.620500087738037\\
10 1.33790004253387\\
11 2.92149996757507\\
12 6.34210014343262\\
13 13.7208995819092\\
};
\addlegendentry{Original}
\addplot [semithick, color1, mark=triangle*, mark size=3, mark options={solid}]
table {%
0 0.0209000110626221\\
1 0.0211000442504883\\
2 0.0286999940872192\\
3 0.0295000076293945\\
4 0.0298999547958374\\
5 0.0322999954223633\\
6 0.0369999408721924\\
7 0.0398000478744507\\
8 0.076200008392334\\
9 0.125\\
10 0.280699968338013\\
11 0.695199966430664\\
12 1.88540005683899\\
13 6.16459989547729\\
};
\addlegendentry{Local Flip-Hoisting}
\end{axis}
\end{tikzpicture}
    \vspace{-2em}
  \caption{Inference time cactus plot.}
  \label{fig:examples-time}
\end{subfigure}\\
\qquad \quad

\begin{subfigure}[b]{0.8\columnwidth}
  \centering
  %auto-ignore
% This file was created with tikzplotlib v0.9.12.
\begin{tikzpicture}

\definecolor{color0}{rgb}{0.274509803921569,0.509803921568627,0.705882352941177}
\definecolor{color1}{rgb}{0.603921568627451,0.803921568627451,0.196078431372549}

\begin{axis}[
  height=100px,
legend cell align={left},
legend style={
  fill opacity=0.8,
  draw opacity=1,
  text opacity=1,
  at={(0.03,0.97)},
  anchor=north west,
  draw=white!80!black
},
tick align=outside,
tick pos=left,
x grid style={white!69.0196078431373!black},
xlabel={$n$},
xmajorgrids,
xmin=3.31, xmax=14.09,
xtick style={color=black},
xtick={4.2,5.2,6.2,7.2,8.2,9.2,10.2,11.2,12.2,13.2},
xticklabel style={rotate=315.0,anchor=west},
xticklabels={
  1,
  2,
  3,
  4,
  5,
  6,
  7,
  8,
  9,
  10
},
y grid style={white!69.0196078431373!black},
ylabel={BDD Size},
ymajorgrids,
ymin=0, ymax=246859.2,
ytick style={color=black},
width=\textwidth,
font=\small
]
\draw[draw=none,fill=color0] (axis cs:3.8,0) rectangle (axis cs:4.2,173);
\addlegendimage{ybar,ybar legend,draw=none,fill=color0}
\addlegendentry{Original}

\draw[draw=none,fill=color0] (axis cs:4.8,0) rectangle (axis cs:5.2,410);
\draw[draw=none,fill=color0] (axis cs:5.8,0) rectangle (axis cs:6.2,941);
\draw[draw=none,fill=color0] (axis cs:6.8,0) rectangle (axis cs:7.2,2137);
\draw[draw=none,fill=color0] (axis cs:7.8,0) rectangle (axis cs:8.2,4816);
\draw[draw=none,fill=color0] (axis cs:8.8,0) rectangle (axis cs:9.2,10592);
\draw[draw=none,fill=color0] (axis cs:9.8,0) rectangle (axis cs:10.2,23241);
\draw[draw=none,fill=color0] (axis cs:10.8,0) rectangle (axis cs:11.2,50599);
\draw[draw=none,fill=color0] (axis cs:11.8,0) rectangle (axis cs:12.2,109337);
\draw[draw=none,fill=color0] (axis cs:12.8,0) rectangle (axis cs:13.2,235104);
\draw[draw=none,fill=color1] (axis cs:4.2,0) rectangle (axis cs:4.6,82);
\addlegendimage{ybar,ybar legend,draw=none,fill=color1}
\addlegendentry{Local Flip-Hoisting}

\draw[draw=none,fill=color1] (axis cs:5.2,0) rectangle (axis cs:5.6,158);
\draw[draw=none,fill=color1] (axis cs:6.2,0) rectangle (axis cs:6.6,329);
\draw[draw=none,fill=color1] (axis cs:7.2,0) rectangle (axis cs:7.6,642);
\draw[draw=none,fill=color1] (axis cs:8.2,0) rectangle (axis cs:8.6,1306);
\draw[draw=none,fill=color1] (axis cs:9.2,0) rectangle (axis cs:9.6,2629);
\draw[draw=none,fill=color1] (axis cs:10.2,0) rectangle (axis cs:10.6,5247);
\draw[draw=none,fill=color1] (axis cs:11.2,0) rectangle (axis cs:11.6,10490);
\draw[draw=none,fill=color1] (axis cs:12.2,0) rectangle (axis cs:12.6,20980);
\draw[draw=none,fill=color1] (axis cs:13.2,0) rectangle (axis cs:13.6,42005);
\end{axis}

\end{tikzpicture}
  \vspace{-2em}
  \caption{BDD size}
  \label{fig:examples-size}
\end{subfigure}

\caption{Performance of synthetic probabilistic programs of increasing size $n$.}
\label{fig:best-examples}
\end{figure}

\subsection{Synthetic benchmarks}
\label{sec:synthetic}
Now we seek to answer the question: \emph{what is the best-case scenario for how 
much \fh{} can increase performance in \dice{}}? 
We evaluated local \fh{} on a synthetic benchmark that is designed to have 
many hoisting opportunities and scale in a parameter $n$:
% \begin{lstlisting}[language=caml, numbers=none, basicstyle=\ttfamily\scriptsize, mathescape=true]
%   let x$_0$ = flip 0.1 in let x$_1$ = flip 0.2 in
%   let x$_2$ = if x$_0$ then if x$_1$ then flip 0.3 else flip 0.4 
%     else if x1 then flip 0.4 else flip 0.3 in
%   $\cdots$
%   let x$_n$ = if x$_{n-2}$ then if x$_{n-1}$ then flip 0.5 
%     else flip 0.6 else if x$_{n-1}$ then flip 0.6 
%     else flip 0.5 in
%   x$_1$ && x$_2$ && $\cdots$ && x$_n$
% \end{lstlisting} 
\begin{lstlisting}[language=caml, numbers=none, basicstyle=\ttfamily\scriptsize, mathescape=true]
  let x$_1$ = flip $p_1$ in $\cdots$ let x$_n$ = flip $p_n$ in
  let x$_{n+1}$ = if x$_0$ then if x$_1$ then $\cdots$ if x$_n$ then
    flip $p_{n+1}$ else flip $p_{n+1}'$ $\cdots$ else 
    if x$_1$ then $\cdots$ then flip $p_{n+1}'$ else flip $p_{n+1}$ in
  $\cdots$
  let x$_{2n}$ = if x$_0$ then if x$_1$ then $\cdots$ if x$_n$ then
    flip $p_{2n}$ else flip $p_{2n}'$ $\cdots$ else 
    if x$_1$ then $\cdots$ then flip $p_{2n}'$ else flip $p_{2n}$ in
  x$_{n+1}$ && x$_{n+2}$ && $\cdots$ && x$_{2n}$
\end{lstlisting} 

where $p_i$ and $p_i'$ are randomly generated probabilities and are randomly assigned to \flip{}s
in $x_i$.

Figure~\ref{fig:best-examples} shows a consistent improvement in performance in
both end-to-end inference time and compilation size for the hoisted version on this simple example. 
\fh{} also dramatically reduces the size of the compiled BDD, to less than a
quarter of the original size, experimentally validating Theorem~\ref{thm:size}.
These results show that \fh{} can halve the
time taken to perform inference. Hence, in the best-case scenario, \fh{} 
can significantly improve inference performance for a practical PPL.

\subsection{Evaluation on practical programs}
\label{sec:realistic}
We have shown that \fh{} can improve performance when there are redundant
\flip{}s, and now we answer whether \fh{} is useful in practice. 
There are two questions: (1) whether opportunities for hoisting 
exist in realistic programs, and (2) whether exploiting those hoisting 
opportunities actually speeds up inference. 
We will show affirmative answers to both of these questions for both 
local and global \fh{} on 
a collection of probabilistic programs drawn from probabilistic 
graphical models and probabilistic verification~\citep{HoltzenCAV21}.

\begin{table}
  \small
  \caption{Change in performance for Bayesian network programs after applying local \fh{}. Negative value means an improvement in performance.}
  \medskip
  \label{tab:bn-examples}
  \centering 
  \begin{tabular}{lrrrrrrrrr}
  \toprule
Benchmark & Time & BDD Size\\
\midrule
\textsc{andes} & -61.51\% & -61.25\%\\
\textsc{bn\_78} & -6.22\% & -2.10\%\\
\textsc{bn\_79} & -1.96\% & -3.53\%\\
\textsc{cpcs54} & -6.09\% & -5.27\%\\
\textsc{link} & +0.78\% & 0.00\%\\
\textsc{moissac3} & -32.80\% & -27.73\%\\
\textsc{munin} & -17.08\% & -24.00\%\\
\textsc{munin1} & -44.49\% & -47.07\%\\
\textsc{munin2} & -31.30\% & -32.27\%\\
\textsc{munin3} & -18.57\% & -20.43\%\\
\textsc{munin4} & -22.72\% & -25.30\%\\
\textsc{pathfinder} & -58.98\% & -72.26\%\\
  \bottomrule
  \end{tabular}
\end{table}

\paragraph{Local \fh{}.} First we evaluate local \fh{} on a collection of well-known 
Bayesian network examples  
from the graphical models community, encoded as \dice{} programs.\footnote{See
\url{https://www.bnlearn.com/bnrepository/}.} We especially consider programs that
are challenging for
\dice{}, as defined by having a runtime of greater than 1 second, since
these are the programs sufficiently complex enough to warrant optimizations.
Table~\ref{tab:bn-examples} shows how local \fh{} helps \dice{}'s inference.
Note that for these programs, we measure
the compilation time; that is, the time to build the BDD representation of the program from the \dice{} program. The end-to-end runtime of a particular query is dominated by the compilation due to inference being in linear time to the compiled BDD and polytime proportional to the compile time ~\citep{holtzen2020scaling}. Thus, we report on the compilation time for a more holistic evaluation. 
The results show that hoisting can greatly help inference performance, barring 
the cases where there are no redundant \flip{}s to hoist. Even then,
the incurred cost from running the optimizations on programs that do not
reduce in size is still minor compared to the whole
of the runtime. We can see that
hoisting also provides BDD size benefits on all examples, further
validating Theorem~\ref{thm:size}. In particular, on some examples, such as 
\textsc{pathfinder}, the BDD size is decreased by as much as 72\%.

\begin{table}
  \small
  \caption{Change in performance for probabilistic verification programs after applying local \fh{}. Negative value means an improvement in performance. BDD Size of \textsc{nand} and \textsc{brp} is measured by the iterative \texttt{step} function used in the program.}
  \medskip
  \label{tab:ver-examples}
  \centering 
  \begin{tabular}{lrrrrr}
  \toprule
  Benchmark & Time & BDD Size\\
  \midrule
  \textsc{nand} & -34.93\% & -2.72\%\\
  \textsc{weather} & -24.06\% & -24.35\%\\
%   \textsc{adv-grid} & +8.19\% & 0.00\%\\
%   \textsc{brp} & -6.76\% & -0.05\%\\
%   \textsc{guymc} & -2.81\% & 0.00\%\\
%   \textsc{queues} & +34.89\% & 0.00\%\\
%   \textsc{motiv} & -57.14\% & 0.00\%\\
  \bottomrule

  \end{tabular}
\end{table}

\paragraph{Hoisting in probabilistic verification.} 
We further evaluate local \fh{} on a completely different set of realistic probabilistic
programs. \citet{HoltzenCAV21} introduced a new class of programs
from the probabilistic verification community;
these models have very different structures from graphical models. We ran
local \fh{} on these models, and found that in 2 out of 7 programs --
\textsc{weather factory} and \textsc{nand} -- there were hoisting opportunities. 
Table~\ref{tab:ver-examples} shows the reduction in compilation runtime 
and in BDD size when local \fh{} is applied.

One program structure present in probabilistic verification and not
Bayesian networks is the use of functions. As \fh{} is an intra-procedural
program analysis, we apply the optimization to each function, and we
observe that the function \textsc{step} in \textsc{nand} benefits from the
optimization. Though the reduction in BDD size of the function is only 2.72\% of
the original, the improvement from \fh{} is compounded substantially because the function is 
called multiple times. These results show that local \fh{} is a general optimization 
that helps probabilistic models other than Bayesian networks.

\paragraph{Hoisting with loopy probabilistic programs.} As seen in the 
probabilistic verification programs, loopy programs 
especially benefit from the effects of \fh{}. \dice{} supports 
statically bounded recursion and thus also bounded loops. Since \fh{} is
an intra-procedural program analysis, the optimization is applied once to
the body of the function (the body of the loop), and the benefits are reaped 
each time the function is called (each iteration of the loop). 
Figure \ref{fig:examples-weather} shows the end-to-end inference time for the 
\textsc{weather} probabilistic verification program with increasing number of loop iterations ($n$). 

\begin{figure}
\centering
    \begin{subfigure}[t]{0.8\columnwidth}
  \centering
    %auto-ignore
\begin{tikzpicture}

\definecolor{color0}{rgb}{0.274509803921569,0.509803921568627,0.705882352941177}
\definecolor{color1}{rgb}{0.603921568627451,0.803921568627451,0.196078431372549}

\begin{axis}[
  height=110px,
legend cell align={left},
legend style={
  fill opacity=0.8,
  draw opacity=1,
  text opacity=1,
  at={(0.03,0.97)},
  anchor=north west,
  draw=white!80!black
},
tick align=outside,
tick pos=left,
x grid style={white!69.0196078431373!black},
xlabel={$n$},
xmajorgrids,
xmin=5, xmax=20,
xtick style={color=black},
y grid style={white!69.0196078431373!black},
ylabel={Time (s)},
ymajorgrids,
ymin=0, ymax=4000,
ytick style={color=black},
width=\textwidth,
style={font=\small}
]
\addplot [semithick, color0, mark=*, mark size=3, mark options={solid}]
table {%
5 30.7
6 77.48
7 148
8 247
9 365
10 526
15 1620
20 3645
};
\addlegendentry{Original}
\addplot [semithick, color1, mark=triangle*, mark size=3, mark options={solid}]
table {%
5 22
6 62
7 105
8 187
9 258
10 369
15 1176
20 2592
};
\addlegendentry{Local Flip-Hoisting}
\end{axis}

\end{tikzpicture}
    \vspace{-2em}
  \end{subfigure}
  \caption{Inference time of \textsc{weather} with increasing number of loop iterations ($n$).}
  \label{fig:examples-weather}
\end{figure}

% TODO: GET RESULTS FOR AMORTIZED RUNTIME?
\begin{table}
  \small
  \caption{Change in performance for Bayesian network pro-
grams after applying global \fh{} in addition to local \fh{}. Negative value
means an improvement in performance.}
  \medskip
  \label{tab:global-examples}
  \centering 
  \begin{tabular}{lrrrrr}
  \toprule
  Benchmarks & Time & BDD Size \\
  \midrule
  \textsc{emdec6g} & +61.55\% & -0.27\%\\
  \textsc{tcc4e} & +22.16\% & -0.3\%\\
  \textsc{win95pts} & +51.98\% & -0.71\%\\
  \bottomrule
  \end{tabular}
\end{table}

\paragraph{Global Hoisting.} We evaluate the BDD sizes of the 
programs when we applied global \fh{} in addition to local \fh{}.
While it is possible to apply local and global \fh{} in either order -- the difference being which of the \flip{}s are hoisted together in the program and the corresponding LCA that they are hoisted to -- we perform local before global hoisting to minimize tracking data-flow facts.

While more general than local hoisting, global hoisting is a more nuanced
phenomenon that is not as widespread in existing programs. It
still helped some examples in terms of size and their results are presented in
Table~\ref{tab:global-examples}.
The largest improvement from global hoisting was \textsc{emdec6g}, where
the BDD size was decreased by 34 nodes.
The compilation time suffered from the overhead of running the data-flow analysis.
Though marginal, small reductions in BDD size are still important improvements with respect to the \dice{} system. Since \dice{} compilation is modular, the same BDD can be reused in further compilation. Thus, the improvements can compound to have a greater effect, as seen in the loopy programs.
In the future, we aim to broaden
the scope of global hoisting to richer classes of \texttt{if}-expressions, and
further tighten the relationship between global hoisting and the underlying
inference algorithm.

%%%%%
\section{DISCUSSION \& RELATED WORK}
\label{sec:related_work}
Thus far we have demonstrated that \fh{} is a practical and effective program
optimization for speeding up \dice{} inference. Now we will give 
broader context on this contribution, and discuss the extent to 
which \fh{} can possibly be generalized to other kinds of PPLs as well as
where \fh{} sits in the broader landscape of optimizations for PPLs 
and other kinds of probabilistic models.

\paragraph{\fh{} Beyond \dice{}.}
Most of today's PPL optimizations are tied to a single inference 
algorithm. Here we consider the question: is \fh{} a general phenomenon that 
can speed up inference in settings beyond \dice{}'s specific inference 
algorithm?
One natural family of languages to consider are other knowledge-compilation 
based languages that consider other compilation targets for exact inference, 
such as AND/OR search trees, cutset networks, sum-product networks, etc.~\citep{MARINESCU20091457,rahman2014cutset,poon2011sum,mateescu2008and,saad2021sppl}.
\texttt{ProbLog}
is a knowledge compilation based probabilistic logic programming language~\citep{fierens2012inference}. 
Similar to \dice{} it performs weighted model counting on the compiled representation
to compute queries. However, \texttt{ProbLog}
is a logic-based programming language and has a very different semantics from \dice{}. For instance,
\texttt{ProbLog} does not have \texttt{if}-statements, and so a strategy of 
identifying locally redundant variables is not possible. 
Nonetheless, we performed a small-scale preliminary analysis where we 
manually identified and hoisted redundant \flip{}s in \texttt{ProbLog} programs
to see the extent to which an automated \fh{} approach might be profitable here.
The example we considered 
is a \texttt{ProbLog} variant of the program in Figure~\ref{fig:simple-ex}.
We observed that, averaged over 5 runs, manual hoisting yielded a 39\% speedup 
in \texttt{ProbLog}'s inference: a promising preliminary evidence that a variant 
of \fh{} would be profitable for \texttt{ProbLog}.

\paragraph{Probabilistic Graphical Models.} 
Now we discuss the broader context of \fh{} and closely related works.
Inference via knowledge compilation has been studied extensively in the areas of
Bayesian networks and graphical
models, and a number of significant optimizations were developed in that
context~\citep{darwiche2002knowledge,sang2005performing,chavira2008probabilistic, darwiche2009modeling,choi2013compiling, dudek2020addmc, dilkas2021weighted, sanner2005affine, boutilier96context}.
In particular, \citet{chavira2008probabilistic} identified
\emph{parameter sharing} as an important optimization for speeding up exact
probabilistic inference in graphical models by exploiting repeated parameters
within a conditional probability table (CPT) while encoding a graphical model into a logical representation.

There are important differences between parameter sharing and \fh{}.
First, \fh{} has \emph{global scope}. Parameter sharing is limited to exploiting
repeated parameters \emph{within a single CPT}, while global \fh{} is
a whole-program analysis that can hoist repeated parameters \emph{across CPTs}.
More generally, global hoisting is able to identify
\texttt{flip} redundancies in the presence of complex control flow.
Second, \fh{} applies to arbitrary probabilistic programs. When applied to
Bayesian networks encoded as \dice{} programs, 
local hoisting optimizes the repeated parameters that parameter sharing 
would have targeted. However, \fh{} can be applied to program structures
like conditionals, functions, branching, and tuples, and thus the optimization works for 
a much broader set of probabilistic models than just the Bayesian networks that
parameter sharing applies to, such as probabilistic verification programs.
Our setting is more general as a program analysis and applies to broader classes of models. 
Thus, \fh{} can be thought of strictly as a generalization of this approach. 

\paragraph{Slicing and Static Analysis.}
Slicing is a classic and widely-applied technique in traditional optimizing compilers for reducing code size by trimming code that is not required for the program to exhibit its intended behavior.
These techniques have been generalized to probabilistic programs~\citep{hur2014slicing, amtoft2020theory}. 
These methods are orthogonal to ours: programs may contain redundant \flip{}s without any possibility for slicing and vice versa. Common sub-expression elimination attempts to reduce redundant code by computing
the result once, storing it in a local variable, and re-using the local variable many times~\citep{aho1986compilers}. This technique relies on sub-expressions being used
more than once under the same path conditions to improve performance, while \fh{} targets
\flip{}s that have inconsistent path conditions.

\paragraph{Symbolic Optimizations.} Probabilistic programming systems like Psi
and Hakaru internally represent probability distributions symbolically using
computer algebra systems~\citep{gehr2016psi, gehr2020lambdapsi,
narayanan2016probabilistic}. Techniques such as delayed sampling~\citep{murray2018delayed} and semi-symbolic inference~\citep{atkinson2022} also represent probability distributions symbolically, using conjugate priors to solve the system analytically as much as possible, before resorting to sampling. These methods differ from ours as 
they operate on and optimize an internal symbolic representation 
instead of the original program.
A related idea is to use knowledge of conjugacy between distributions
to simplify programs; this is employed by systems like BUGS, JAGS, and Autoconj, but it
exploits a distinct structure from \fh{} where no conjugacy is
necessary~\citep{plummer2003jags, thomas1992bugs, hoffman2018autoconj}. 

\paragraph{Exploiting Structure.} Scaling inference by exploiting structure is
a well-known and widely studied approach. Lifted inference exploits global symmetries 
of the underlying probability distribution in order to scale~\citep{poole2003first, braz2005lifted, kersting2012lifted, ahmadi2013exploiting}; \fh{} exploits local redundant parameters in the distribution,
which may or may not correspond with the kinds of symmetries exploited by lifted inference.
SampleSearch exploits determinism in importance sampling algorithms, whereas our work exploits a broader set of local structure~\citep{gogate2011samplesearch}. 

\paragraph{Probabilistic Program Analyses and Optimizations.} There has been numerous recent
work in techniques for improving inference performance in PPLs in the form of program analyses
and optimizations, such as automatic reparameterization, program
decomposition, and amortization~\citep{gorinova2020automatic, zhou2020divide, ritchie2016deep}.
Similar to how \fh{} analyzes path conditions to find redundant variables, SYMPAIS analyzes path conditions to estimate the satisfaction probability of numerical constraints ~\citep{luo2020sympais}. Another approach to reducing redundant computation using static
analysis is proposed in~\citet{nori2014r2}, where R2 uses static analysis to reject samples early if it is possible to determine that they will violate subsequent observations. This method
does not focus on exact inference and so is not applicable to the same type of programs that
\fh{} targets.

\section{CONCLUSION \& FUTURE WORK}
\label{sec:conclusion}

We present \fh{}, a program analysis and associated optimization for
knowledge compilation based discrete probabilistic programs. \fh{} empirically makes
inference more efficient on a range of programs from the literature on probabilistic graphical models and probabilistic verification.
In the future, we anticipate extending \fh{} to hoist continuous random variables or to optimize approximate inference using a similar approach and pursuing forms of \fh{} across procedures.
Long term, we expect \fh{} to become part of a standard suite of probabilistic
program optimizations, providing efficiency even for programs written by non-expert users in the underlying probabilistic inference algorithm. % used by the PPL.

\bibliography{main}

\newpage
\onecolumn
% \aistatstitle{\texttt{flip}-hoisting: Exploiting Repeated Parameters\\in Discrete Probabilistic Programs\\
% Supplementary Materials}

\appendix
\section{ADDITIONAL RESULTS}
We present here the full tables of all experiment results for every benchmark. We imposed a timeout threshold of 20 minutes.

\subsection{Synthetic Programs}
Table~\ref{tab:ex_time} shows the end-to-end inference runtime for the  synthetic 
programs, and Table~\ref{tab:ex_size} shows the BDD size for these programs.

\begin{table}[h]
\small
\centering
\begin{minipage}{0.35\linewidth}
\centering
\caption{Inference runtime for synthetic programs.}
\medskip
\begin{tabular}{rr}
\toprule
Original (s) & Local Flip-Hoisting (s)\\
\midrule
0.04 & 0.04 \\
0.03 & 0.03 \\
0.03 & 0.03 \\
0.03 & 0.03 \\
0.04 & \textbf{0.03} \\
0.05 & \textbf{0.03} \\
0.09 & \textbf{0.04} \\
0.17 & \textbf{0.04} \\
0.37 & \textbf{0.07} \\
0.81 & \textbf{0.17} \\
1.65 & \textbf{0.34} \\
3.56 & \textbf{0.82} \\
7.42 & \textbf{2.37} \\
13.36 & \textbf{6.39} \\
\bottomrule
\label{tab:ex_time}
\end{tabular}
\end{minipage}
\qquad\qquad
\begin{minipage}{0.4\linewidth}
\centering
\caption{Size of compiled BDDs for synthetic programs.}
\medskip
\begin{tabular}{rr}
\toprule
Original & Local Flip-Hoisting \\
\midrule
3 & 3 \\
7 & 7 \\
31 & \textbf{16} \\
72 & \textbf{40} \\
173 & \textbf{82} \\
410 & \textbf{158} \\
941 & \textbf{329} \\
2,137 & \textbf{642} \\
4,816 & \textbf{1,306} \\
10,592 & \textbf{2,629} \\
23,241 & \textbf{5,247} \\
50,599 & \textbf{10,490} \\
109,337 & \textbf{20,980} \\
235,104 & \textbf{42,005} \\
\bottomrule
\label{tab:ex_size}
\end{tabular}
\end{minipage}
\end{table}

\subsection{Bayesian Network Programs}
Table~\ref{tab:bn_time} shows the compilation runtime for the Bayesian network
programs, and Table~\ref{tab:bn_size} shows the BDD size for these programs.
Table~\ref{tab:global_time} shows the compilation runtime for running global \fh{}
on the Bayesian network programs that show improvement in BDD size, and 
Table~\ref{tab:global_size} shows the BDD size for these programs.

\begin{table}[h]
\small
\centering
\begin{minipage}{0.4\linewidth}
\centering
\caption{Compilation runtime for Bayesian network programs.}
\medskip
\begin{tabular}{lrr}
\toprule
Benchmarks & Original (s) & Local Flip-Hoisting (s) \\
\midrule
\textsc{andes} & 7.52 & \textbf{2.90} \\
\textsc{bn\_78} & 4.97 & \textbf{4.66} \\
\textsc{bn\_79} & 26.78 & \textbf{26.25} \\
% \textsc{alarm-alt} & 0.14 & \textbf{0.10} \\
% \textsc{alarm-safer} & 0.25 & \textbf{0.18} \\
% \textsc{alarm} & \textbf{0.27} & 0.42 \\
% \textsc{barley} & - & - \\
% \textsc{child} & 0.08 & 0.08 \\
\textsc{cpcs54} & 1.55 & \textbf{1.46} \\
% \textsc{diabetes} & - & - \\
% \textsc{diagnose\_a} & 0.16 & \textbf{0.12} \\
% \textsc{diagnose\_a\_multi} & 0.11 & \textbf{0.10} \\
% \textsc{diagnose\_b} & 0.32 & \textbf{0.26} \\
% \textsc{diagnose\_b\_multi} & 0.33 & \textbf{0.23} \\
% \textsc{emdec6g} & 0.12 & \textbf{0.08} \\
% \textsc{hailfinder} & 0.76 & \textbf{0.72} \\
% \textsc{hepar2} & \textbf{0.21} & 0.22 \\
% \textsc{insurance} & 0.29 & \textbf{0.22} \\
\textsc{link} & \textbf{168.88} & 170.20 \\
% \textsc{mildew} & - & - \\
\textsc{moissac3} & 2.13 & \textbf{1.43} \\
\textsc{munin} & 20.72 & \textbf{17.18} \\
\textsc{munin1} & 19.53 & \textbf{10.84} \\
\textsc{munin2} & 34.85 & \textbf{23.94} \\
\textsc{munin3} & 50.73 & \textbf{41.31} \\
\textsc{munin4} & 20.55 & \textbf{15.88} \\
\textsc{pathfinder} & 4.53 & \textbf{1.86} \\
% \textsc{pigs} & 0.42 & \textbf{0.37} \\
% \textsc{survey} & 0.03 & \textbf{0.02} \\
% \textsc{tcc4e} & 0.13 & \textbf{0.09} \\
% \textsc{water} & \textbf{0.29} & 0.35 \\
% \textsc{win95pts} & 0.06 & \textbf{0.03} \\
\bottomrule
\end{tabular}
\label{tab:bn_time}
\end{minipage}
\qquad\qquad
\begin{minipage}{0.4\linewidth}
\centering
\caption{Size of compiled BDDs for Bayesian network programs.}
\medskip
\begin{tabular}{lrr}
\toprule
Benchmarks & Original & Local Flip-Hoisting \\
\midrule
\textsc{andes} & 8,520,656 & \textbf{3,301,994} \\
\textsc{bn\_78} & 3,305,000 & \textbf{3,235,616} \\
\textsc{bn\_79} & 21,200,743 & \textbf{20,453,340} \\
% \textsc{alarm-alt} & 51,679 & \textbf{21,215} \\
% \textsc{alarm-safer} & 214,376 & \textbf{163,773} \\
% \textsc{alarm} & 254,766 & \textbf{99,137} \\
% \textsc{barley} & - & - \\ 
% \textsc{child} & 2,112 & \textbf{1,700} \\
\textsc{cpcs54} & 2,438,881 & \textbf{2,310,466} \\
% \textsc{diabetes} & - & - \\
% \textsc{diagnose\_a} & 41,257 & \textbf{6,765} \\
% \textsc{diagnose\_a\_multi} & 26,197 & \textbf{6,765} \\
% \textsc{diagnose\_b} & 97,573 & \textbf{10,695} \\
% \textsc{diagnose\_b\_multi} & 147,375 & \textbf{10,695} \\
% \textsc{emdec6g} & 75,999 & \textbf{12,214} \\
% \textsc{hailfinder} & 145,917 & \textbf{145,813} \\
% \textsc{hepar2} & 36,244 & \textbf{36,208} \\
% \textsc{insurance} & 90,095 & \textbf{72,386} \\
\textsc{link} & 14,964,749 & 14,964,749 \\
% \textsc{mildew} & - & - \\
\textsc{moissac3} & 498,240 & \textbf{360,070} \\
\textsc{munin} & 4,528,204 & \textbf{3,441,252} \\
\textsc{munin1} & 4,328,101 & \textbf{2,290,665} \\
\textsc{munin2} & 9,285,498 & \textbf{6,289,405} \\
\textsc{munin3} & 13,528,675 & \textbf{10,764,714} \\
\textsc{munin4} & 4,643,895 & \textbf{3,468,929} \\
\textsc{pathfinder} & 61,157 & \textbf{16,968} \\
% \textsc{pigs} & 180,124 & \textbf{140,617} \\
% \textsc{survey} & 89 & 89 \\
% \textsc{tcc4e} & 3,112 & \textbf{1,321} \\
% \textsc{water} & 44,703 & \textbf{26,714} \\
% \textsc{win95pts} & 1,545 & \textbf{982} \\
\bottomrule
\end{tabular}
\label{tab:bn_size}
\end{minipage}
\end{table}

\begin{table}[h]
\small
\centering
\begin{minipage}{0.4\linewidth}
\centering
\caption{Compilation runtime for global \fh{} in addition to local \fh{}.}
\medskip
\begin{tabular}{lrr}
\toprule
  Benchmarks & Local & Global \\
  \midrule
  \textsc{emdec6g} & \textbf{0.0814} & 0.1315 \\
  \textsc{tcc4e} & \textbf{0.088} & 0.1075 \\
  \textsc{win95pts} & \textbf{0.0329} & 0.05 \\
\bottomrule
\end{tabular}
\label{tab:global_time}
\end{minipage}
\qquad\qquad
\begin{minipage}{0.4\linewidth}
\centering
\caption{Size of compiled BDDs for global \fh{} in addition to local \fh{}.}
\medskip
\begin{tabular}{lrr}
\toprule
  Benchmarks & Local & Global \\
  \midrule
  \textsc{emdec6g} & 12,214 & \textbf{12,180} \\
  \textsc{tcc4e} & 1,321 & \textbf{1,317} \\
  \textsc{win95pts} & 982 & \textbf{975} \\
\bottomrule
\end{tabular}
\label{tab:global_size}
\end{minipage}
\end{table}

\subsection{Probabilistic Verification Programs}
Table~\ref{tab:ver_time} shows the compilation runtime for the probabilistic verification
programs, and Table~\ref{tab:ver_size} shows the BDD size for these programs. Some programs show improvements in the sizes of the \textsc{step} function rather than in the state BDD.

\begin{table}[h]
\small
\centering
\begin{minipage}{0.4\linewidth}
\centering
\caption{Compilation runtime for probabilistic verification programs.}
\medskip
\begin{tabular}{lrr}
\toprule
Benchmarks & Original (s) & Local Flip-Hoisting (s) \\
\midrule
\textsc{nand} & 72.98 & \textbf{47.49} \\
\textsc{weather} & 458.60 & \textbf{348.25} \\
\textsc{adv-grid} & \textbf{58.12} & 62.88 \\
\textsc{brp} & 0.74 & \textbf{0.69} \\
\textsc{guymc} & 6.40 & \textbf{6.22} \\
\textsc{queues} & \textbf{242.18} & 326.68 \\
\textsc{motiv} & 0.07 & \textbf{0.03} \\
\bottomrule
\end{tabular}
\label{tab:ver_time}
\end{minipage}
\qquad\qquad
\begin{minipage}{0.4\linewidth}
\centering
\caption{Size of compiled BDDs for probabilistic verification programs.}
\medskip
\begin{tabular}{lrr}
\toprule
Benchmarks & Original & Local Flip-Hoisting \\
\midrule
\textsc{nand (step)} & 6,761 & \textbf{6,577} \\
\textsc{weather} & 350,155 & \textbf{264,909} \\
\textsc{adv-grid} & 302 & 302 \\
\textsc{brp (step)} & 4,255 & \textbf{4,253} \\
\textsc{guymc} & 21,139 & 21,139 \\
\textsc{queues} & 15,812,199 & 15,812,199 \\
\textsc{motiv} & 44 & 44 \\
\bottomrule
\end{tabular}
\label{tab:ver_size}
\end{minipage}
\end{table}

\subsection{Loopy Programs}
Table~\ref{tab:weather_time} shows the inference runtime for the \textsc{weather} probabilistic program as $n$, the number of loop iterations run, increases.

\begin{table}[h]
\small
\centering
\caption{Inference runtime for \textsc{weather} programs.}
\medskip
\begin{tabular}{lrr}
\toprule
$n$ & Original (s) & Local Flip-Hoisting(s) \\
\midrule
\textsc{5} & 30.73 & \textbf{22.38} \\
\textsc{6} & 77.48 & \textbf{62.12} \\
\textsc{7} & 148.26 & \textbf{105.60} \\
\textsc{8} & 247.28 & \textbf{187.97} \\
\textsc{9} & 365.87 & \textbf{258.61} \\
\textsc{10} & 526.65 & \textbf{369.88} \\
\textsc{15} & 1620.38 & \textbf{1176.86} \\
\textsc{20} & 3645.16 & \textbf{2592.16} \\
\bottomrule
\end{tabular}
\label{tab:weather_time}
\end{table}

\clearpage

\subsection{\texttt{ProbLog}}
We generated an extended version of the motivating example program with and without
hoisted parameters in \texttt{ProbLog}. The program is a chain of variables conditioned
on the previous two variables,

\begin{center}
\begin{tabular}{c}
\begin{lstlisting}[language=prolog, numbers=none, basicstyle=\ttfamily\scriptsize, mathescape=true]
    0.1::x$_0$. 0.2::x$_1$. 
    0.3::x$_2$p1. 
    0.4::x$_2$p2. 
    0.4::x$_2$p3. 
    0.3::x$_2$p4.
    x$_2$ :- x$_0$, x$_1$, x$_2$p1.
    x$_2$ :- \+x$_0$, x$_1$, x$_2$p2.
    x$_2$ :- x$_0$, \+x$_1$, x$_2$p3.
    x$_2$ :- \+x$_0$, \+x$_1$, x$_2$p4.
    ...
    0.3::x$_n$p1.
    0.4::x$_n$p2. 
    0.4::x$_n$p3. 
    0.3::x$_n$p4.
    x$_n$ :- x$_{n-2}$, x$_{n-1}$, x$_n$p1.
    x$_n$ :- \+x$_{n-2}$, x$_{n-1}$, x$_n$p2.
    x$_n$ :- x$_{n-2}$, \+x$_{n-1}$, x$_n$p3.
    x$_n$ :- \+x$_{n-2}$, \+x$_{n-1}$, x$_n$p4.
\end{lstlisting}
\end{tabular}
\end{center}

and the hoisted version,

\begin{center}
\begin{tabular}{c}
\begin{lstlisting}[language=prolog, numbers=none, basicstyle=\ttfamily\scriptsize, mathescape=true]
    0.1::x$_0$. 
    0.2::x$_1$. 
    0.3::x$_2$p1. 
    0.4::x$_2$p2. 
    x$_2$ :- x$_0$, x$_1$, x$_2$p1.
    x$_2$ :- \+x$_0$, x$_1$, x$_2$p2.
    x$_2$ :- x$_0$, \+x$_1$, x$_2$p2.
    x$_2$ :- \+x$_0$, \+x$_1$, x$_2$p1.
    ...
    0.3::x$_n$p1. 
    0.4::x$_n$p2. 
    x$_n$ :- x$_{n-2}$, x$_{n-1}$, x$_n$p1.
    x$_n$ :- \+x$_{n-2}$, x$_{n-1}$, x$_n$p2.
    x$_n$ :- x$_{n-2}$, \+x$_{n-1}$, x$_n$p2.
    x$_n$ :- \+x$_{n-2}$, \+x$_{n-1}$, x$_n$p1.
\end{lstlisting}

\end{tabular}
\end{center}

We present in Table~\ref{tab:problog_time} the runtime of each version of the program.

\begin{table}[h]
\small
\centering
\caption{\texttt{ProbLog} inference runtime with and without manual hoisting.}
\medskip
\begin{tabular}{cc}
\toprule
Original (s) & Hoisted (s) \\
\midrule
13.87 & \textbf{8.61} \\
14.13 & \textbf{8.56} \\
14.11 & \textbf{8.68} \\
14.38 & \textbf{8.61} \\
14.13 & \textbf{8.49} \\
% 14.34 & \textbf{8.55} \\
% 14.34 & \textbf{8.51} \\
% 14.27 & \textbf{8.60} \\
% 14.23 & \textbf{8.55} \\
% 15.04 & \textbf{8.70} \\
% 14.79 & \textbf{8.93} \\
% 14.29 & \textbf{8.74} \\
% 14.25 & \textbf{8.75} \\
% 14.25 & \textbf{8.65} \\
% 14.58 & \textbf{8.77} \\
% 14.24 & \textbf{8.47} \\
% 15.36 & \textbf{9.46} \\
% 15.47 & \textbf{9.63} \\
% 15.34 & \textbf{9.21} \\
% 15.27 & \textbf{9.19} \\
\bottomrule
\end{tabular}
\label{tab:problog_time}
\end{table}

%%% 
\section{\texttt{flip}-HOISTING BAYESIAN NETWORKS}

\fh{} generalizes parameter sharing~\citep{chavira2008probabilistic} to 
probabilistic programs by way of 
reducing redundant random variables. To illustrate how \fh{}
optimizes repeated parameters in the context of Bayesian networks, we present here
an example model.

Figure~\ref{fig:bnencodeA} shows an example discrete Bayesian network on two
variables, $A$ and $B$. The variable $A$ takes on values in the domain $\{0, 1,
2\}$, and $B$ on the domain $\{0, 1\}$. The conditional probability tables (CPTs)
are given in Figure~\ref{fig:bnencodeB}.
We can see that the CPT for $\Pr(B \mid A)$ has repeated parameters: $\Pr(B = 0 \mid A =
1) = \Pr(B = 0 \mid A = 2)$. \citet{chavira2008probabilistic} showed how to
exploit repetitious parameters while encoding a graphical model into a logical
representation. 

Figure~\ref{fig:bnencodeC} represents this 
Bayesian network as a \dice{}
program. It requires the use of the keyword \texttt{discrete}, which defines
a discrete probability distribution over integer values. 
\texttt{discrete} is syntactic sugar that is turned into a series of \flip{}s before compilation.
To define the conditional
distribution $\Pr(B \mid A)$, we branch on each possible value of $A$, flipping
a differently weighted coin for each. Finally,
we return a tuple $(A, B)$ which represents the distribution
on all values the variables in the Bayesian network can jointly take.
\texttt{flip 0.2} is redundant here, since the two copies occur on two different
branches of the \texttt{if}-expression; they would be hoisted when \fh{}
is applied. We can see that these hoisted \flip{}s correspond to the repeated
parameters that would be targeted by parameter sharing. However, \fh{} is a program
analysis and optimization that can be applied to any \dice{} program, not only
Bayesian networks. Thus, \fh{} can benefit a broader set of probabilistic models
and can be thought of strictly as a generalization of parameter sharing.

\begin{figure}
  \centering
\begin{subfigure}[b]{0.45\linewidth}
  \centering
  \begin{tikzpicture}[style={node distance=1.5cm}]
\node[circle, draw] (a) {$A$};
\node[circle, draw, right of = a] (b) {$B$};
\draw[-stealth] (a) -- (b);
  \end{tikzpicture}
  \vspace{1.5em}
  \caption{A simple Bayesian network with 2 variables.}
  \label{fig:bnencodeA}
\end{subfigure}
\qquad
\begin{subfigure}[b]{0.45\linewidth}
  \centering
\begin{lstlisting}[language=caml, escapechar=|, basicstyle=\ttfamily\scriptsize]
let A = 
  discrete(0.2, 0.3, 0.5) 
in
let B = 
  if A==0 then flip 0.1 
  else if A==1 then 
     flip 0.2 
  else flip 0.2 in (A, B)|\label{line:ret}|
\end{lstlisting}
  \caption{A \dice{} encoding of the Bayesian network.}
  \label{fig:bnencodeC}
\end{subfigure}
\bigskip

\begin{subfigure}[b]{\linewidth}
  \small
  \centering
  \begin{tabular}{l|l}
    \toprule
    $A$ & $\Pr(A)$ \\
    \midrule
    0 & 0.2 \\
    1 & 0.3 \\
    2 & 0.5 \\
    \bottomrule
  \end{tabular}
  \qquad\quad
   \begin{tabular}{ll|l}
    \toprule
    $A$ & $B$ & $\Pr(B \mid A)$ \\
    \midrule
     0 & 0 & 0.1 \\
     0 & 1 & 0.9 \\
     1 & 0 & 0.2 \\
     1 & 1 & 0.8 \\
     2 & 0 & 0.2 \\
     2 & 1 & 0.8 \\
    \bottomrule
  \end{tabular}
  \caption{The CPTs for $A$ and $B$.}
  \label{fig:bnencodeB}
\end{subfigure}

\caption{Bayesian network encoding as \dice{} program.}
  \label{fig:bnencode}
\end{figure}

% \bibliography{aistats23}

\end{document}